\newtheorem{theorem}{Theorem}
\newtheorem{lemma}{Lemma}
\newcommand{\LEARNBOUND}{D}
\newcommand{\CSIZE}{f}
\newcommand{\RR}{{\mathbbm R}}
\newcommand{\E}{{\mathbbm E}}
\begin{document}

\title{Stochastic Contextual Bandits with Graph-based Contexts}

\author{
  Jittat Fakcharoenphol\thanks{Email: jittat@gmail.com. Department of Computer Engineering, Kasetsart University, Bangkok, Thailand, 10900.  Supported by the Thailand Research Fund, Grant RSA-6180074.}
  \and
  Chayutpong Prompak\thanks{Email: chay.promp@gmail.com. Department of Computer Engineering, Kasetsart University, Bangkok, Thailand, 10900.}
}

\maketitle



\begin{abstract}
We naturally generalize the on-line graph prediction problem to a
version of stochastic contextual bandit problems where contexts are
vertices in a graph and the structure of the graph provides
information on the similarity of contexts. More specifically, we are
given a graph $G=(V,E)$, whose vertex set $V$ represents contexts with
{\em unknown} vertex label $y$.  In our stochastic contextual bandit
setting, vertices with the same label share the same reward
distribution.  The standard notion of instance difficulties in graph
label prediction is the cutsize $\CSIZE$ defined to be the number of
edges whose end points having different labels.  For line graphs and
trees we present an algorithm with regret bound of
$\tilde{O}(T^{2/3}K^{1/3}\CSIZE^{1/3})$ where $K$ is the number of
arms.  Our algorithm relies on the optimal stochastic bandit algorithm
by Zimmert and Seldin~[AISTAT'19, JMLR'21].  When the best arm
outperforms the other arms, the regret improves to
$\tilde{O}(\sqrt{KT\cdot \CSIZE})$.  The regret bound in the later
case is comparable to other optimal contextual bandit results in more
general cases, but our algorithm is easy to analyze, runs very
efficiently, and does not require an i.i.d. assumption on the input
context sequence.  The algorithm also works with general graphs using
a standard random spanning tree reduction.
\end{abstract}


\section{Introduction}

Side information for many learning problems can be represented as a
graph.  For example, since a ``type'' of a social network user is
typically highly correlated with the user's friends, social network
service providers, with full knowledge of the social network graph,
can use the graph structure to help making decision on which
advertisement to show to the user.  It can also use the graph
structure to infer various properties of its users, e.g., their life
styles, social statuses, and marketing values.

User type prediction can be modeled as follows.  Given an $n$-vertex
graph $G=(V,E)$, a label set ${\mathcal L}$, and an unknown vertex
labels $y:V\rightarrow {\mathcal L}$ representing certain vertex
classification, the {\em on-line label prediction problem}, studied in
Herbster, Pontil, and Wainer~\cite{HerbsterPW05-icml-learning-graphs};
Herbster and Pontil~\cite{HerbsterP06-nips-perceptron}; Herbster,
Lever, and Pontil~\cite{HerbsterLP08-large-diameter}; and
Cesa-Bianchi, Gentile, and
Vitale~\cite{Cesa-BianchiGV09-optimal-tree}, works in rounds, for each
round, a vertex is queried, and the learning algorithm predicts the
label of that particular vertex.  The available graph structure
provides clues to the learner by promising that adjacent vertices
usually have the same label.  The measure of complexity of the problem
instance is the {\em cutsize} $\CSIZE=\CSIZE_G(y)$ defined as the
number of edges $(u,v)\in E$ such that $y(u)\neq y(v)$.  We note that
while the notion of cutsize is very natural when the graph is a tree,
in a general graph the cutsize can be large, e.g., for dense graph the
cutsize can be $\Omega(n^2)$.  Previous results on graph prediction
guarantee mistake bounds that depend linearly on $\CSIZE$ and
logarithmically on the size of the graph $n$.

In most case, we have no access to vertex real labels and more
importantly, our goal is to make decisions instead of figuring out the
labels.  Also, the problem itself may have elements of uncertainty.
For example, the same user may give different responses when asked
many times.

With this motivation, we generalize the label prediction to a version
of stochastic contextual bandit problems where contexts are vertices
in a graph and the structure of the graph provides information on the
similarity of contexts.  More specifically, the unknown label $y$
defines the context group and vertices from the same context group
share the same reward behavior (as modeled by a bandit problem).

We are interested in the case where the reward behavior is represented
as a stochastic multi-armed bandit problem with $K$ arms.  Given graph
$G=(V,E)$ (with unknown vertex label $y$), the algorithm proceeds in
$T$ rounds and, for each round $t$, receives a context $u\in V$.  The
algorithm has to pick an arm $I_t$ to maximize the total rewards.  The
rewards stochastically depend on the unknown vertex label $y(u)$ and
the arm $I$.  Unlike the label prediction problem, a single vertex $u$
can be queried multiple times.  We only consider the case of $[0,1]$
rewards.

Essentially we deal with line graphs.  Given the algorithm for line
graphs, a simple reduction by Herbster, Lever, and
Pontil~\cite{HerbsterLP08-large-diameter} allows us to work with a
tree with $\CSIZE$ cut edges using a line graph with $2\cdot\CSIZE$
cut edges.  For general graphs, we work with their random spanning
tree representation as in, e.g.,~\cite{Cesa-BianchiGV09-optimal-tree,
  HerbsterLP08-large-diameter, Cesa-BianchiGVZ10-icml-random-trees}.
We state relevant results in Section~\ref{sect:algo-gen-graphs}.

Our main result on line graphs (and trees) is the following.

\begin{theorem}
  Consider a line graph or a tree with $n$ vertices and $\CSIZE$ cut edges.
  There is an efficient bandit algorithm that has a regret bound of
  $\tilde{O}(T^{2/3}\cdot K^{1/3}\cdot \CSIZE^{1/3})$, where $T$ is the number of
  rounds and $K$ is the number of arms.
  \label{thm:main1}
\end{theorem}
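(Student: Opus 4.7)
The plan is to exploit the structural fact that a line graph with cutsize $\CSIZE$ partitions into at most $\CSIZE+1$ maximal runs of identically-labelled consecutive vertices, so the context sequence really decomposes into at most $\CSIZE+1$ stochastic bandit instances. If the partition were known, running an independent Zimmert--Seldin instance on each run would already yield an $\tilde{O}(\sqrt{K\CSIZE T})$ bound by Cauchy--Schwarz (the ``clear best arm'' bound promised in the abstract). The weaker $T^{2/3}$ bound is the price paid for discovering the partition online.

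I would design the algorithm around contiguous \emph{clusters} of visited vertex positions, each tagged \READY{}, \ACTIVE{}, or \DEAD{}. A \READY{} cluster runs a single Zimmert--Seldin stochastic MAB on the rewards pooled from all its vertices; once it accumulates $\LEARNBOUND$ pulls it transitions to \ACTIVE{}, committing to the empirically best arm $a^\star$. Subsequent visits to an \ACTIVE{} cluster simply play $a^\star$, while a concentration-based consistency test monitors whether $a^\star$'s observed rewards remain compatible with the cluster's empirical mean. If the test trips on a boundary vertex, that vertex spawns a new \READY{} cluster and the original is frozen as \DEAD{} at its current extent; visits to vertices not adjacent to any existing cluster also start fresh \READY{} clusters.

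The regret analysis decomposes by cluster. Each \READY{} cluster accrues Zimmert--Seldin stochastic regret $\tilde{O}(\sqrt{K\LEARNBOUND})$ over its $\LEARNBOUND$ pulls; an \ACTIVE{} cluster whose committed $a^\star$ matches the true label's best arm contributes zero exploitation regret; each cut edge can waste at most $\tilde{O}(\LEARNBOUND)$ rounds before the consistency test fires. Since (with high probability, via a union bound on false splits) each true cut edge spawns only a constant number of new clusters, the total cluster count is $O(\CSIZE)$, giving aggregate exploration $\tilde{O}(\CSIZE\sqrt{K\LEARNBOUND})$, boundary-detection cost $\tilde{O}(\CSIZE\LEARNBOUND)$, and a residual exploitation term $\tilde{O}(T\sqrt{K/\LEARNBOUND})$ from below-detection-threshold suboptimality gaps. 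Tuning $\LEARNBOUND = \Theta((T^{2}K/\CSIZE^{2})^{1/3})$ balances the $\CSIZE\LEARNBOUND$ and $T\sqrt{K/\LEARNBOUND}$ terms at $\tilde{O}(T^{2/3} K^{1/3} \CSIZE^{1/3})$, with the $\CSIZE\sqrt{K\LEARNBOUND}$ term dominated. The tree case follows from the line-graph reduction of Herbster--Lever--Pontil cited earlier.

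The main obstacle will be calibrating the boundary-detection test: it must trip within $\tilde{O}(\LEARNBOUND)$ rounds after a cut edge is crossed (else the $\CSIZE\LEARNBOUND$ bookkeeping term blows up), yet its false-positive rate must be small enough that spurious splits do not inflate the cluster count beyond $O(\CSIZE)$. A related subtlety is the benign case in which two adjacent true labels happen to share the same best arm, so no boundary \emph{needs} to be detected and the algorithm should gracefully fold such cut edges into a single cluster. Both issues require carefully linking the test's confidence radius to the internal Zimmert--Seldin estimates and applying a union bound over all $T$ rounds; once these are in place the rest of the argument is routine accounting.
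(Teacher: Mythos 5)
There is a genuine gap, and it sits exactly where your accounting needs it most: the consistency test on \ACTIVE{} clusters. Once a cluster commits to $a^\star$, the only rewards you ever observe on its (growing) set of vertices are draws of $a^\star$. A cut edge can separate two groups in which $a^\star$ has \emph{identical} mean but a different arm is far better on the new side (e.g.\ $a^\star$ has mean $0.5$ on both sides, while some other arm jumps from $0.4$ to $1$). Your test, which only checks that $a^\star$'s observed rewards stay compatible with the cluster's empirical mean, can never fire in this situation, the cluster keeps absorbing vertices across the cut edge, and the algorithm suffers regret at a constant rate for up to $\Omega(T)$ rounds; this loss is charged to none of your three terms ($\CSIZE\sqrt{K\LEARNBOUND}$, $\CSIZE\LEARNBOUND$, $T\sqrt{K/\LEARNBOUND}$). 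Even when the mean of $a^\star$ does shift across the edge, the detection time scales like the inverse square of that shift while the regret rate scales with the (unrelated) suboptimality gap of $a^\star$ in the new group, so the claim that each cut edge wastes only $\tilde{O}(\LEARNBOUND)$ rounds is unjustified without forced exploration of the other arms inside \ACTIVE{} clusters --- and adding that changes the algorithm and the analysis. A secondary issue: Tsallis-INF is a regret-minimization algorithm, not a best-arm identification procedure, so extracting a high-probability guarantee that the committed arm's gap is $\tilde{O}(\sqrt{K/\LEARNBOUND})$ from its internal importance-weighted estimates is not routine (uniform exploration in the \READY{} phase would fix this at cost $O(\LEARNBOUND)$ per cluster, but only if the cluster count really is $O(\CSIZE)$, which again rests on the broken detection step).

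The paper's proof avoids commitment and detection altogether, which is why it goes through so easily. It fixes a dyadic hierarchical decomposition of the line into $O(\log n)$ levels, runs Tsallis-INF on each block, and \emph{unconditionally} splits every block into its two children after $\LEARNBOUND$ rounds. Then each cut edge is contained in at most one block per level, hence corrupts only $O(\log n)$ subroutines over the whole run (Lemma~\ref{lemma:bad-sub}); each corrupted subroutine costs at most $\LEARNBOUND$, each uncorrupted one costs $O(\sqrt{K\LEARNBOUND})$ by Theorem~\ref{thm:tsallis-inf}, and there are at most $2T/\LEARNBOUND$ subroutines, giving $O(\LEARNBOUND\,\CSIZE\log n + T\sqrt{K/\LEARNBOUND})$ --- the same tradeoff you tuned, but with no change-point test, no commitment, and no high-probability union bounds to calibrate. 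If you want to salvage your design, you would need to keep exploring (at some controlled rate) inside \ACTIVE{} clusters so that a best-arm change across a cut edge is detectable at all; the paper's forced-splitting hierarchy is precisely a way of never having to do this.
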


Our algorithm is a very simple divide-and-conquer algorithm that
hierarchically decompose the contexts into many levels.  Originally,
our algorithm relies heavily on the best-arm identification algorithm,
called Successive Rejects, of Audibert, Bubeck, and
Munos~\cite{AudibertBM10} with regret bound of
$\tilde{O}(T^{3/4}\cdot K^{1/4}\cdot \CSIZE^{1/4})$.  The simpler and
better version using an optimal bandit subroutine by Zimmert and
Seldin~\cite{jmlr21-zimmer-seldin-tsallis} presented here has been
suggested by Anonymous Reviewer \# 2 from ALT'21.

It still requires the prior knowledge of the total number of rounds
$T$.  We note that there are dependencies between $T$ and $K$, i.e.,
$K\leq T$ so that the bound does not implies an upper bound better
than $\Omega(\sqrt{KT})$.  The above bound is distribution
independent, i.e., it works for any reward distribution.

However, in an easy case, where the best arm outperforms the other
arms, the Tsallis-INF algorithm has a better regret bound and, in
turn, we also have an improved bound of $\tilde{O}(\sqrt{KT\cdot
  \CSIZE})$.

One notable property of the algorithm is that it does not aim to
identify the cut edges, so there is no need for a requirement that
different context groups should behave differently.

Also, the parameter $\CSIZE$ is actually the ``observable'' cutsize,
i.e., if some vertex $u$ does not appear as contexts we are free to
assume its label $y(u)$ to be anything and the observable cutsize is
the minimum cutsize obtainable after some label renaming.

As our work is very simple (after the suggestion from an anonymous
reviwer), we defer the discussion and comparison with related work to
Section~\ref{sect:related}.  Section~\ref{sect:prelim} gives formal
settings.  The algorithm for line graphs and its analysis is presented
in Section~\ref{sect:algo-line}.  We briefly discuss how to extend the
line-graph result to trees and general graphs in
Section~\ref{sect:algo-gen-graphs}.

\section{Preliminaries}
\label{sect:prelim}

\subsection{Settings}

We are given a context set $S = \{c_1,c_2,c_3,...,c_N\}$ and arms
$1,2,\ldots,K$.  At each time step $t=1,2,\ldots,$ the player receives
a context $c\in S$ and has to play an arm $I_t \in \{1,\ldots,K\}$
then incurs reward $X_{I_t,t} \in [0,1]$.  The goal is to maximize
reward.  For contextual stochastic bandits strategy, the reward
$X_{I_t,t}$ is drawn independently from the past from an unknown
distribution $\nu_{I_t}^{c}$ that depends only on the context $c$ and
the arm $I_t$.  In this work we consider losses, defined as
$\ell_{I_t,t} = 1-X_{I_t,t}$ rather than rewards.

The learning problem proceeds in $T$ rounds.  For each round, the
algorithm is to map contexts to arms and minimizes the loss. The
total losses after $T$ rounds is $\sum_{t=1}^T \ell_{I_t,t}$.

For context $c$ and arm $i$, let $\mu_i^c$ be the mean reward from
distribution $\nu_i^c$.  Let $i^*_c$ be the best arm for context $c$,
i.e., $i^*_c = \arg_i\max_{i=1}^K\mu_i^c$.  Thus for context $c$, the
minimum expected loss, denoted by $loss^*_c$, is
\[
1 - \mu_{i^*_c}^c.
\]
Thus, given a sequence of contexts $s_1,s_2,\ldots,s_T$, the minimum
expected loss, denoted as $loss^*$, is
\[
loss^* = \sum_{t=1}^T loss^*_{s_t}.
\]

We are also given a graph $G=(S,E)$ with an unknown vertex label
$y:S\rightarrow L$ for some label set $L$.  We sometimes refer to a
context in $S$ as a vertex.  Every context with the same label share
the same reward distribution, i.e., for every pair of context $c,c'
\in S$ such that $y(c)=y(c')$, for every arm $i$, the distribution
$v_i^{c}=v_i^{c'}$.  Note that we have no access to the labels $y$.
The labels $y$ partition contexts into {\em context groups}. When
context $c$ and $c'$ share the same label, we say that $c$ and $c'$
are {\em from the same group}.  Contexts from the same group share the
same best arm.

Given the graph $G=(S,E)$ and vertex label $y$, an edge $(u,v)\in E$
is a {\em cut edge} if $y(u)\neq y(v)$.  Let $\CSIZE$ be the number of
cut edges.

In this work, we mostly work with line graphs.  The result extends to
trees and general graphs using standard techniques (see
Section~\ref{sect:algo-gen-graphs}).

\subsection{The multi-armed bandit problem}
\label{sect:prelim-mab}

Since the underlying problem for a specific context $c$ is the
stochastic version of the multi-armed bandit, we start by reviewing
the problem and algorithms for solving it in this section.  In this
problem, there are $K$ arms; each arm $i$ corresponds to an unknown
probability distribution $\nu_i$.  For each round $t=1,\ldots,T$, the
algorithm picks an arm $I_t\in\{1,\ldots,K\}$ and receives rewards
$X_{I_t,t}$ drawn from $\nu_i$.  Let $\mu_i$ be the mean of arm $i$;
the best arm $i^*$ is $\arg_i\max_i\mu_i$.  Let $\mu^*=\mu_{i^*}$.
The pseudo-regret for the algorithm is
\[
T\mu^*-\E\left[\sum_{t=1}^T\mu_{I_t}\right].
\]
Since the learning becomes harder when the expected rewards of other
arms are close to $\mu_i$, we define $\Delta_i=\mu^*-\mu_i$ to
represent how close arm $i$ to the best arm.  As
in~\cite{jmlr21-zimmer-seldin-tsallis}, we assume that there is a
unique best arm $i^*$ where $\Delta_{i^*}=0$ and $\Delta_i>0$ for all
$i\neq i^*$.  We let $\Delta_{\mathrm{min}}=\min_{\Delta_i>0}\Delta_i$.

Our main subroutine is the Tsallis-INF algorithm
by~\cite{jmlr21-zimmer-seldin-tsallis}, which is an algorithm based
on online mirror descent with Tsallis entropy regularization with
power $\alpha=1/2$.  The algorithm uses an unbiased estimator for the
loss $\hat{\ell}_{i,t}$ for arm $i$ at time $t$, such that
$\E_{I_t}[\hat{\ell}_{i,t}]=\ell_{i,t}$ internally.  While
\cite{jmlr21-zimmer-seldin-tsallis} proposed two alternatives for
these estimators, we focus on the one based on importance-weighted
sampling referred to as (IW) estimators and only state their result
for this type of estimators below.

\begin{theorem}[\cite{jmlr21-zimmer-seldin-tsallis}]
  The pseudo-regret of Tsallis-INF with $\alpha=1/2$ using (IW)
  estimators is at most
  \[
  4\sqrt{KT}+1,
  \]
  where $K$ is the number of arms and $T$ is the number of rounds.
  Also if $\Delta_{\min}$ is a constant, the pseudo-regret can be
  bounded by
  \[
  O(K\log T).
  \]
  \label{thm:tsallis-inf}
\end{theorem}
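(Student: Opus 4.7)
The plan is to analyze Tsallis-INF as Follow-the-Regularized-Leader (FTRL) with the Tsallis-entropy regularizer $\Psi_t(p) = -\frac{1}{\eta_t \alpha(1-\alpha)} \sum_{i=1}^K p_i^{\alpha}$ at $\alpha=1/2$ and learning-rate schedule $\eta_t = c/\sqrt{t}$ for a suitable constant $c$. Following the standard FTRL template, I would decompose the pseudo-regret against the best-arm indicator $e_{i^*}$ into a \emph{penalty} term $\Psi_T(e_{i^*}) - \min_p \Psi_T(p)$ and a \emph{stability} term $\sum_t \langle p_t - p_{t+1}, \hat{\ell}_t\rangle$, and bound each separately.

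For the worst-case $O(\sqrt{KT})$ bound, the penalty at $\alpha=1/2$ evaluates to $\frac{4(\sqrt{K}-1)}{\eta_T}$, which is $O(\sqrt{KT})$. For the stability, the local-norm inequality gives $\langle p_t - p_{t+1}, \hat{\ell}_t\rangle \leq \tfrac{1}{2}\|\hat{\ell}_t\|_{(\nabla^2 \Psi_t)^{-1}}^2$; the inverse Hessian at $\alpha=1/2$ is $\eta_t\,\mathrm{diag}(p_{i,t}^{3/2})$, and the (IW) estimator satisfies $\E[\hat{\ell}_{i,t}^2 \mid p_t] \leq 1/p_{i,t}$, so the per-round stability expectation is at most $\tfrac{\eta_t}{2}\,\E\bigl[\sum_i \sqrt{p_{i,t}}\bigr] \leq \tfrac{\eta_t \sqrt{K}}{2}$ by Cauchy--Schwarz. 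Summing $\sum_t 1/\sqrt{t} \leq 2\sqrt{T}$ and tuning $c$ yields the claimed $4\sqrt{KT}+1$.

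For the stochastic refinement under a unique best arm with gap $\Delta_{\min}$, I would retain the \emph{data-dependent} form of the stability bound, $R_T \leq O\bigl(\E\sum_t \eta_t \sum_i \sqrt{p_{i,t}}\bigr) + \text{penalty}$, and apply a self-bounding argument. Split $\sum_i \sqrt{p_{i,t}} = \sqrt{p_{i^*,t}} + \sum_{i \neq i^*}\sqrt{p_{i,t}}$; use the concavity estimate $\E[\sqrt{p_{i^*,t}}] \leq 1 - \tfrac{1}{2}\E[1-p_{i^*,t}]$ together with Cauchy--Schwarz $\sum_{i\neq i^*}\sqrt{p_{i,t}} \leq \sqrt{(K-1)\sum_{i\neq i^*} p_{i,t}}$, and invoke the gap identity $\sum_{i \neq i^*} p_{i,t} \leq \E[r_t]/\Delta_{\min}$, where $r_t$ denotes the instantaneous regret. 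After summation this produces an inequality of the form $R_T \leq C_1 \log T + C_2\sqrt{K\, R_T/\Delta_{\min}}$, which by AM--GM (or the quadratic formula) solves to $R_T = O(K\log T)$ when $\Delta_{\min}$ is a constant.

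The hard part will be the stochastic step: producing a $\log T$ factor (rather than $\sqrt{T}$) requires carefully pairing the concavity refinement on $\sqrt{p_{i^*,t}}$ with the exact $1/\sqrt{t}$ schedule so that the surplus from the best arm cancels against the negative drift induced by the gap $\Delta_{\min}$. Tightening constants to the stated $4\sqrt{KT}+1$ also depends on a careful variant of the local-norm inequality that avoids losing factors in the Hessian bound; these are tedious but routine, so I regard the self-bounding step as the main technical content.
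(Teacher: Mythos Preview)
The paper does not prove this theorem at all: it is quoted verbatim as a result of Zimmert and Seldin~\cite{jmlr21-zimmer-seldin-tsallis} and used as a black box in the analysis of the divide-and-conquer algorithm (Theorem~\ref{thm:main-regret} and Section~\ref{sect:easy}). There is therefore no ``paper's own proof'' to compare against; the authors simply invoke the bound.

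Your sketch is a faithful outline of the Zimmert--Seldin argument itself (FTRL with Tsallis regularizer, penalty/stability decomposition, local-norm bound on the stability term, and the self-bounding step for the stochastic refinement). If your goal were to reproduce that external result, the plan is sound and the self-bounding step is indeed the crux. But for the purposes of \emph{this} paper, no proof is expected or supplied; the correct ``proof'' here is just the citation.
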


We refer to the case where $\Delta_{\min}=\Omega(1)$ as an easy case.
This case is discussed in Section~\ref{sect:easy}.


\section{An algorithm for line graphs and its analysis}
\label{sect:algo-line}

We start by describing a divide-and-conquer algorithm\footnote{ We
remark again that, originally, the algorithm is more complex than the
one presented here based on Tsallis-INF as suggested by an anonymous
referee from ALT'21.  We really appreciate the suggestion.  } for line
graphs, where, for $1\leq i<n$, vertex (context) $c_i$ is adjacent to
vertex $c_{i+1}$.  For simplicity, assume that $n$ is a power of two,
i.e., let $n=2^L$.

This algorithm hierarchically decomposes the context set into $O(\log
n)$ levels.  For each level, we partition the context set into a set
of the same size, and treat contexts in the same set as if they are
from the same unknown context group.  At the top level, we have only
two context sets, each of size $n/2$.  As the level decreases, the
size of each context set decreases exponentially.  We start by running
bandit subroutines on the highest level.  When an active subroutine
for context set $C'$ runs for a number of rounds, we terminate it,
pretend to split the context $C'$ by going down a level, and start
running two new bandit subroutines.

We now formally describe the algorithm.  Recall that $n=2^L$.  We
maintain stochastic bandit subroutines, each executes the Tsallis-INF
algorithm, in $L$ levels.  For level $p$, where $1\leq p\leq L$, there
are $n/2^{L-p}$ bandit subroutines, denoted by
$B_p(1),\ldots,B_p(n/2^{L-p})$.  Bandit subroutine $B_p(j)$,
essentially, deals with contexts
\[
\{c_{2^{(L-p)(j-1)}},\ldots,c_{2^{(L-p)j}}\}
\]
as if they are from the
same group.  We also say that bandit subroutine $B_p(j)$ is {\em
  responsible} for contexts
\[
\{c_{2^{(L-p)(j-1)}},\ldots,c_{2^{(L-p)j}}\}.
\]

For level $1\leq p<L$, for subroutine $B_p(j)$, we call $B_{p+1}(j_l)$
and $B_{p+1}(j_r)$, where $j_l=2j-1$ and $j_r=2j$, {\em children} of
$B_p(j)$.  Note that they are responsible for contexts
$\{c_{2^{(L-p)(j-1)}},\ldots,c_{2^{(L-p)j}/2}\}$ and
$\{c_{2^{(L-p)j}/2+1},\ldots,c_{2^{(L-p)j}}\}$, respectively.  (See
figure \ref{fig:subroutines}).

\begin{figure}
  \centering
  \def\svgwidth{3in}

  \begingroup%
  \makeatletter%
  \providecommand\color[2][]{%
    \errmessage{(Inkscape) Color is used for the text in Inkscape, but the package 'color.sty' is not loaded}%
    \renewcommand\color[2][]{}%
  }%
  \providecommand\transparent[1]{%
    \errmessage{(Inkscape) Transparency is used (non-zero) for the text in Inkscape, but the package 'transparent.sty' is not loaded}%
    \renewcommand\transparent[1]{}%
  }%
  \providecommand\rotatebox[2]{#2}%
  \newcommand*\fsize{\dimexpr\f@size pt\relax}%
  \newcommand*\lineheight[1]{\fontsize{\fsize}{#1\fsize}\selectfont}%
  \ifx\svgwidth\undefined%
    \setlength{\unitlength}{268.94231901bp}%
    \ifx\svgscale\undefined%
      \relax%
    \else%
      \setlength{\unitlength}{\unitlength * \real{\svgscale}}%
    \fi%
  \else%
    \setlength{\unitlength}{\svgwidth}%
  \fi%
  \global\let\svgwidth\undefined%
  \global\let\svgscale\undefined%
  \makeatother%
  \begin{picture}(1,0.50329222)%
    \lineheight{1}%
    \setlength\tabcolsep{0pt}%
    \put(0,0){\includegraphics[width=\unitlength,page=1]{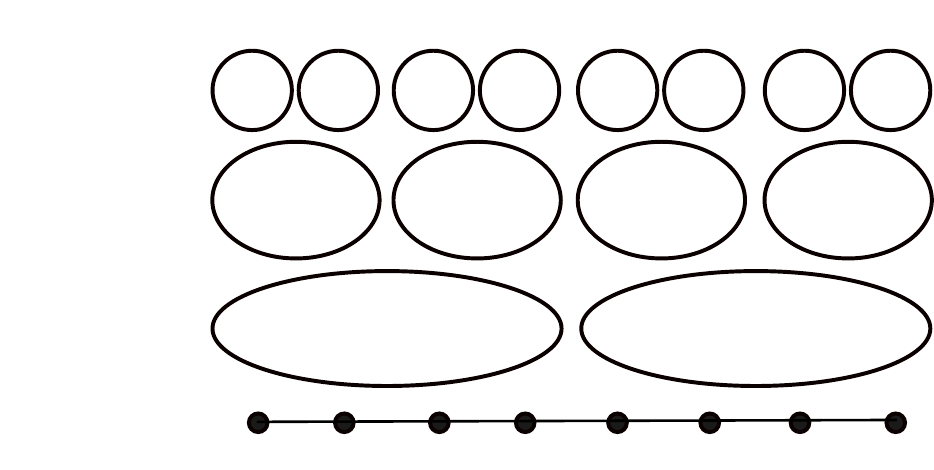}}%
    \put(0.0218773,0.04319166){\color[rgb]{0,0,0}\makebox(0,0)[lt]{\lineheight{1.25}\smash{\begin{tabular}[t]{l}Contexts\end{tabular}}}}%
    \put(0.1486307,0.13792125){\color[rgb]{0,0,0}\makebox(0,0)[lt]{\lineheight{1.25}\smash{\begin{tabular}[t]{l}1\end{tabular}}}}%
    \put(0.1489212,0.27520124){\color[rgb]{0,0,0}\makebox(0,0)[lt]{\lineheight{1.25}\smash{\begin{tabular}[t]{l}2\end{tabular}}}}%
    \put(0.14817681,0.39294764){\color[rgb]{0,0,0}\makebox(0,0)[lt]{\lineheight{1.25}\smash{\begin{tabular}[t]{l}3\end{tabular}}}}%
    \put(-0.39229362,1.52820172){\color[rgb]{0,0,0}\makebox(0,0)[lt]{\begin{minipage}{2.13058086\unitlength}\raggedright \end{minipage}}}%
    \put(0.05318665,0.47504209){\color[rgb]{0,0,0}\makebox(0,0)[lt]{\lineheight{1.25}\smash{\begin{tabular}[t]{l}Levels\end{tabular}}}}%
    \put(0.08200531,1.03131713){\color[rgb]{0,0,0}\makebox(0,0)[lt]{\begin{minipage}{1.50571093\unitlength}\raggedright \end{minipage}}}%
    \put(0.33054696,0.13761259){\color[rgb]{0,0,0}\makebox(0,0)[lt]{\lineheight{1.25}\smash{\begin{tabular}[t]{l}$B_1(1)$\end{tabular}}}}%
    \put(0.72545101,0.13761259){\color[rgb]{0,0,0}\makebox(0,0)[lt]{\lineheight{1.25}\smash{\begin{tabular}[t]{l}$B_1(2)$\end{tabular}}}}%
    \put(0.23301699,0.2751377){\color[rgb]{0,0,0}\makebox(0,0)[lt]{\lineheight{1.25}\smash{\begin{tabular}[t]{l}$B_2(1)$\end{tabular}}}}%
    \put(0.42703886,0.2751377){\color[rgb]{0,0,0}\makebox(0,0)[lt]{\lineheight{1.25}\smash{\begin{tabular}[t]{l}$B_2(2)$\end{tabular}}}}%
    \put(0.62434309,0.2751377){\color[rgb]{0,0,0}\makebox(0,0)[lt]{\lineheight{1.25}\smash{\begin{tabular}[t]{l}$B_2(3)$\end{tabular}}}}%
    \put(0.82440779,0.2773436){\color[rgb]{0,0,0}\makebox(0,0)[lt]{\lineheight{1.25}\smash{\begin{tabular}[t]{l}$B_2(4)$\end{tabular}}}}%
    \put(-0.00617291,0.26189317){\color[rgb]{0,0,0}\makebox(0,0)[lt]{\lineheight{1.25}\smash{\begin{tabular}[t]{l}$B$\end{tabular}}}}%
    \put(0.26673118,0.0004067){\color[rgb]{0,0,0}\makebox(0,0)[lt]{\lineheight{1.25}\smash{\begin{tabular}[t]{l}1\end{tabular}}}}%
    \put(0.35964921,0.00021061){\color[rgb]{0,0,0}\makebox(0,0)[lt]{\lineheight{1.25}\smash{\begin{tabular}[t]{l}2\end{tabular}}}}%
    \put(0.4610726,0.00042121){\color[rgb]{0,0,0}\makebox(0,0)[lt]{\lineheight{1.25}\smash{\begin{tabular}[t]{l}3\end{tabular}}}}%
    \put(0.55308225,0.0004067){\color[rgb]{0,0,0}\makebox(0,0)[lt]{\lineheight{1.25}\smash{\begin{tabular}[t]{l}4\end{tabular}}}}%
    \put(0.65195142,0.0006173){\color[rgb]{0,0,0}\makebox(0,0)[lt]{\lineheight{1.25}\smash{\begin{tabular}[t]{l}5\end{tabular}}}}%
    \put(0.75047429,0.00042121){\color[rgb]{0,0,0}\makebox(0,0)[lt]{\lineheight{1.25}\smash{\begin{tabular}[t]{l}6\end{tabular}}}}%
    \put(0.84727944,0.0004067){\color[rgb]{0,0,0}\makebox(0,0)[lt]{\lineheight{1.25}\smash{\begin{tabular}[t]{l}7\end{tabular}}}}%
    \put(0.94977909,0.00042121){\color[rgb]{0,0,0}\makebox(0,0)[lt]{\lineheight{1.25}\smash{\begin{tabular}[t]{l}8\end{tabular}}}}%
  \end{picture}%
\endgroup%

  \caption{Subroutines and contexts.  $B_1(1)$ is {\em responsible} for context set \{1,2,3,4\}. $B_2(1)$ and $B_2(2)$ are children of $B_1(1)$.}
  \label{fig:subroutines}
\end{figure}

There is a parameters $\LEARNBOUND$ that defines the number of rounds
a bandit subroutine handles requests on its responsible contexts.  Each
subroutine $B_p(j)$ not on level $L$ would runs the Tsallis-INF
algorithm for $D$ rounds, after that it deactivates itself and
activates its children $B_{p+1}(j_l)$ and $B_{p+1}(j_r)$.


\subsection{Analysis of pseudo-regret for line graphs}
\label{sect:analysis}
  
We essentially analyze the case for line graphs.  The analysis for the
general cases (for trees and general graphs) follows from standard
reductions (see Section~\ref{sect:algo-gen-graphs}).

For each bandit subroutine $B_p(j)$, we say that the subroutine is in
{\em good} situation if its responsible contexts are from the same
context group. Otherwise, we say that it is in a {\em bad} situation.
(See figure \ref{fig:situation}).

\begin{figure}
  \centering
  \def\svgwidth{3in}

\begingroup%
  \makeatletter%
  \providecommand\color[2][]{%
    \errmessage{(Inkscape) Color is used for the text in Inkscape, but the package 'color.sty' is not loaded}%
    \renewcommand\color[2][]{}%
  }%
  \providecommand\transparent[1]{%
    \errmessage{(Inkscape) Transparency is used (non-zero) for the text in Inkscape, but the package 'transparent.sty' is not loaded}%
    \renewcommand\transparent[1]{}%
  }%
  \providecommand\rotatebox[2]{#2}%
  \newcommand*\fsize{\dimexpr\f@size pt\relax}%
  \newcommand*\lineheight[1]{\fontsize{\fsize}{#1\fsize}\selectfont}%
  \ifx\svgwidth\undefined%
    \setlength{\unitlength}{268.44771318bp}%
    \ifx\svgscale\undefined%
      \relax%
    \else%
      \setlength{\unitlength}{\unitlength * \real{\svgscale}}%
    \fi%
  \else%
    \setlength{\unitlength}{\svgwidth}%
  \fi%
  \global\let\svgwidth\undefined%
  \global\let\svgscale\undefined%
  \makeatother%
  \begin{picture}(1,0.59468884)%
    \lineheight{1}%
    \setlength\tabcolsep{0pt}%
    \put(0,0){\includegraphics[width=\unitlength,page=1]{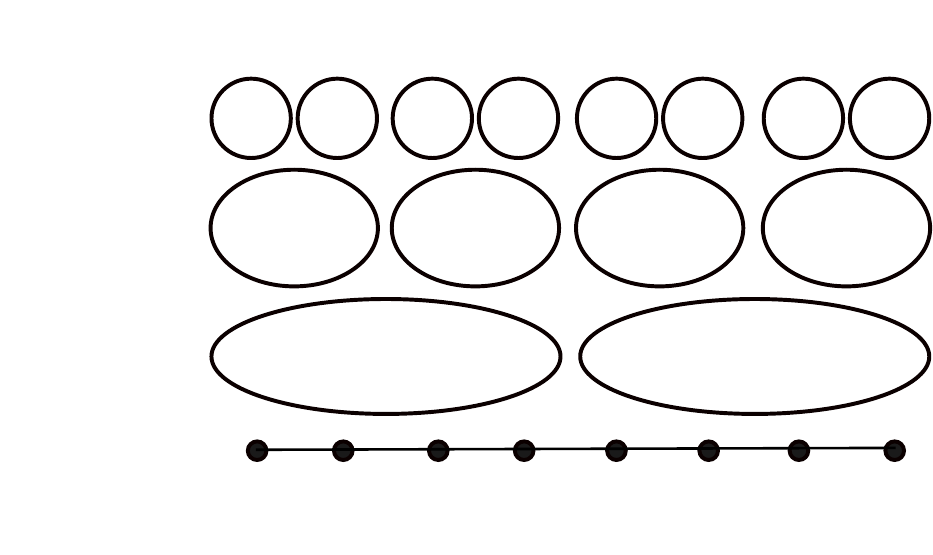}}%
    \put(0.02078988,0.10377121){\color[rgb]{0,0,0}\makebox(0,0)[lt]{\lineheight{1.25}\smash{\begin{tabular}[t]{l}Contexts\end{tabular}}}}%
    \put(0.14777683,0.19867534){\color[rgb]{0,0,0}\makebox(0,0)[lt]{\lineheight{1.25}\smash{\begin{tabular}[t]{l}1\end{tabular}}}}%
    \put(0.14806786,0.33620827){\color[rgb]{0,0,0}\makebox(0,0)[lt]{\lineheight{1.25}\smash{\begin{tabular}[t]{l}2\end{tabular}}}}%
    \put(0.1473221,0.4541716){\color[rgb]{0,0,0}\makebox(0,0)[lt]{\lineheight{1.25}\smash{\begin{tabular}[t]{l}3\end{tabular}}}}%
    \put(-0.39414413,1.59151735){\color[rgb]{0,0,0}\makebox(0,0)[lt]{\begin{minipage}{2.13450638\unitlength}\raggedright \end{minipage}}}%
    \put(0.05215692,0.53641731){\color[rgb]{0,0,0}\makebox(0,0)[lt]{\lineheight{1.25}\smash{\begin{tabular}[t]{l}Levels\end{tabular}}}}%
    \put(0.08102868,1.09371727){\color[rgb]{0,0,0}\makebox(0,0)[lt]{\begin{minipage}{1.50848515\unitlength}\raggedright \end{minipage}}}%
    \put(0.37793824,0.19836611){\color[rgb]{0,0,0}\makebox(0,0)[lt]{\lineheight{1.25}\smash{\begin{tabular}[t]{l}Bad\end{tabular}}}}%
    \put(0.77356993,0.19836611){\color[rgb]{0,0,0}\makebox(0,0)[lt]{\lineheight{1.25}\smash{\begin{tabular}[t]{l}Bad\end{tabular}}}}%
    \put(0.26727259,0.3361446){\color[rgb]{0,0,0}\makebox(0,0)[lt]{\lineheight{1.25}\smash{\begin{tabular}[t]{l}Good\end{tabular}}}}%
    \put(0.46165193,0.3361446){\color[rgb]{0,0,0}\makebox(0,0)[lt]{\lineheight{1.25}\smash{\begin{tabular}[t]{l}Good\end{tabular}}}}%
    \put(0.67156097,0.3361446){\color[rgb]{0,0,0}\makebox(0,0)[lt]{\lineheight{1.25}\smash{\begin{tabular}[t]{l}Bad\end{tabular}}}}%
    \put(0.85688823,0.33835458){\color[rgb]{0,0,0}\makebox(0,0)[lt]{\lineheight{1.25}\smash{\begin{tabular}[t]{l}Good\end{tabular}}}}%
    \put(-0.007312,0.32287567){\color[rgb]{0,0,0}\makebox(0,0)[lt]{\lineheight{1.25}\smash{\begin{tabular}[t]{l}B\end{tabular}}}}%
    \put(0.2660949,0.06090742){\color[rgb]{0,0,0}\makebox(0,0)[lt]{\lineheight{1.25}\smash{\begin{tabular}[t]{l}1\end{tabular}}}}%
    \put(0.35918413,0.06071097){\color[rgb]{0,0,0}\makebox(0,0)[lt]{\lineheight{1.25}\smash{\begin{tabular}[t]{l}2\end{tabular}}}}%
    \put(0.46079439,0.06092196){\color[rgb]{0,0,0}\makebox(0,0)[lt]{\lineheight{1.25}\smash{\begin{tabular}[t]{l}3\end{tabular}}}}%
    \put(0.55297356,0.06090742){\color[rgb]{0,0,0}\makebox(0,0)[lt]{\lineheight{1.25}\smash{\begin{tabular}[t]{l}4\end{tabular}}}}%
    \put(0.6520249,0.06111841){\color[rgb]{0,0,0}\makebox(0,0)[lt]{\lineheight{1.25}\smash{\begin{tabular}[t]{l}5\end{tabular}}}}%
    \put(0.75072929,0.06092196){\color[rgb]{0,0,0}\makebox(0,0)[lt]{\lineheight{1.25}\smash{\begin{tabular}[t]{l}6\end{tabular}}}}%
    \put(0.8477128,0.06090742){\color[rgb]{0,0,0}\makebox(0,0)[lt]{\lineheight{1.25}\smash{\begin{tabular}[t]{l}7\end{tabular}}}}%
    \put(0.9504013,0.06092196){\color[rgb]{0,0,0}\makebox(0,0)[lt]{\lineheight{1.25}\smash{\begin{tabular}[t]{l}8\end{tabular}}}}%
    \put(0.64754711,0.4541716){\color[rgb]{0,0,0}\makebox(0,0)[lt]{\lineheight{1.25}\smash{\begin{tabular}[t]{l}G\end{tabular}}}}%
    \put(0.73997429,0.4541716){\color[rgb]{0,0,0}\makebox(0,0)[lt]{\lineheight{1.25}\smash{\begin{tabular}[t]{l}G\end{tabular}}}}%
    \put(0.27746011,0.02964715){\color[rgb]{0,0,0}\makebox(0,0)[lt]{\lineheight{1.25}\smash{\begin{tabular}[t]{l}group 1\end{tabular}}}}%
    \put(0.51970029,0.0294998){\color[rgb]{0,0,0}\makebox(0,0)[lt]{\lineheight{1.25}\smash{\begin{tabular}[t]{l}group 2\end{tabular}}}}%
    \put(0.8138518,0.0294998){\color[rgb]{0,0,0}\makebox(0,0)[lt]{\lineheight{1.25}\smash{\begin{tabular}[t]{l}group 3\end{tabular}}}}%
    \put(0,0){\includegraphics[width=\unitlength,page=2]{b_good.pdf}}%
  \end{picture}%
\endgroup%

  \caption{Subroutines in good and bad situations.}
  \label{fig:situation}
\end{figure}

Let $\mathcal C$ be the set of every context set that some bandit
subroutine is responsible for.

We first bound the number of bandit subroutines in bad situations.  A
subroutine $B_p(j)$ is in a bad situation when the subgraph of $G$
corresponding to the contexts that $B_p(j)$ is responsible for
contains a cut edge.  Recall that our decomposition contains $O(\log
n)$ levels; thus, a single cut edge may appear in $O(\log n)$ context
sets in $\mathcal C$ and contributes to $O(\log n)$ subroutines in bad
situations.  Since there are $f$ cut edges, the number of subroutines
in bad situations is $O(f\cdot\log n)$, yielding the following lemma.

\begin{lemma}
  The number of subroutines in bad situations is at most $O(f\cdot\log
  n)$.
  \label{lemma:bad-sub}
\end{lemma}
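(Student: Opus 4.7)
The plan is to bound, for each cut edge individually, the number of subroutines whose responsible context set contains that edge, and then sum over all cut edges. Since the decomposition has $L=\log n$ levels and the responsible context sets at level $p$ form a partition of $\{c_1,\ldots,c_n\}$ into consecutive blocks of length $2^{L-p}$, for each cut edge $e=(c_i,c_{i+1})$ and each level $p$ there is at most one subroutine at that level whose responsible context set contains both endpoints of $e$ (the edge either lies inside some single block of the level-$p$ partition or straddles two adjacent blocks). Hence a single cut edge can be internal to at most $L=O(\log n)$ responsible context sets.

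The first step is to formalize the equivalence: $B_p(j)$ is in a bad situation exactly when some edge $(c_k,c_{k+1})$ with $2^{L-p}(j-1)\le k < 2^{L-p}j$ is a cut edge. This is immediate from the definitions, since a bad situation by definition means the vertices in the responsible set do not all share one label, and the induced subgraph on those vertices is a contiguous subpath of the line graph; having two different labels on a connected subpath is equivalent to containing a cut edge.

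The second step is the counting argument: define, for each cut edge $e$, the set $S(e)$ of subroutines whose responsible context set contains $e$. By the partition observation above, $|S(e)|\le L=\log n$. The set of all bad subroutines is $\bigcup_{e\text{ cut}} S(e)$, so its cardinality is at most $\sum_{e\text{ cut}}|S(e)| \le f\cdot\log n$, which gives the desired $O(f\log n)$ bound.

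There is essentially no obstacle here: the lemma is a direct consequence of the structure of the hierarchical decomposition and the definition of ``bad situation.'' The only care needed is to observe that the partition property of the context sets at each fixed level ensures each cut edge is ``charged'' at most once per level, which is exactly what makes the union bound work.
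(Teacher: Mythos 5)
Your proposal is correct and follows essentially the same argument as the paper: each cut edge lies in at most one responsible context set per level of the hierarchical decomposition, hence in $O(\log n)$ sets in total, and summing over the $f$ cut edges gives the $O(f\log n)$ bound. Your write-up is in fact a bit more careful than the paper's, since you also spell out why, on a contiguous subpath, being in a bad situation is equivalent to containing a cut edge.
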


We are ready to prove the main theorem (which is a restatement of
Theorem~\ref{thm:main1}).

\begin{theorem}
  If the cutsize of a line graph is $f$, the pseudo-regret is at most
  $\tilde{O}(T^{2/3}K^{1/3}f^{1/3})$.
  \label{thm:main-regret}
\end{theorem}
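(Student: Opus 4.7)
The plan is to decompose the total pseudo-regret across the activated subroutines, classify each as good or bad, bound each contribution separately, and then tune the parameter $\LEARNBOUND$. Since every round is served by exactly one active subroutine, the pseudo-regret splits cleanly into a sum over subroutines. For a good subroutine $B_p(j)$, all contexts it serves share one label, so the observed rewards are i.i.d.\ draws from a single stochastic $K$-armed bandit instance; Theorem~\ref{thm:tsallis-inf} then bounds its contribution by $O(\sqrt{K\LEARNBOUND})$, since it runs for at most $\LEARNBOUND$ rounds. For a bad subroutine, I would use the trivial bound $\LEARNBOUND$ on its contribution, since losses lie in $[0,1]$.

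Next, I would bound how many subroutines of each type appear. Lemma~\ref{lemma:bad-sub} already supplies $O(f\log n)$ bad subroutines. For the total count of ever-active subroutines, I would argue that a subroutine only deactivates after serving $\LEARNBOUND$ requests, so at most $T/\LEARNBOUND$ subroutines ever deactivate. Since each deactivation spawns exactly two children and the top level adds a constant, the total number of subroutines that are ever activated is $O(T/\LEARNBOUND)$, which upper bounds the number of good subroutines as well.

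Putting everything together, the total pseudo-regret is at most
\[
O\!\left(\frac{T}{\LEARNBOUND}\cdot\sqrt{K\LEARNBOUND}\right) + O(f\log n \cdot \LEARNBOUND) \;=\; O\!\left(T\sqrt{K/\LEARNBOUND} + f\LEARNBOUND \log n\right).
\]
Choosing $\LEARNBOUND \sim T^{2/3}K^{1/3}/(f\log n)^{2/3}$ balances the two terms at $O(T^{2/3}K^{1/3}f^{1/3}(\log n)^{1/3})$, which is $\tilde{O}(T^{2/3}K^{1/3}f^{1/3})$ as claimed.

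The delicate step I expect is the bookkeeping for activations: subroutines still running at the horizon $T$ have not served $\LEARNBOUND$ rounds and so fall outside the naive $T/\LEARNBOUND$ deactivation count; one must verify that these leftover subroutines are at most $O(T/\LEARNBOUND)$ as well (they are, because their parents did finish a full $\LEARNBOUND$-round block). A minor secondary point is that Tsallis-INF requires its horizon in advance, but here each subroutine is spawned with the fixed horizon $\LEARNBOUND$ known at the outset, so the per-subroutine guarantee of Theorem~\ref{thm:tsallis-inf} applies directly.
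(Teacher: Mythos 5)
Your proposal is correct and follows essentially the same argument as the paper: split the regret over subroutines, charge each bad subroutine the trivial $\LEARNBOUND$ bound via Lemma~\ref{lemma:bad-sub}, charge each of the $O(T/\LEARNBOUND)$ good subroutines the Tsallis-INF bound $O(\sqrt{K\LEARNBOUND})$ from Theorem~\ref{thm:tsallis-inf}, and balance by tuning $\LEARNBOUND$. Your extra remarks on counting the still-active subroutines at the horizon and on the per-subroutine horizon for Tsallis-INF only make explicit what the paper leaves implicit.
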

\begin{proof}
  Since only two new subroutines are activated after their parent
  deactivates and each subroutine deactivates after handling
  $\LEARNBOUND$ requests, the number of subroutines activated is at
  most $2T/D$.

  From Lemma~\ref{lemma:bad-sub}, the number of subroutines in bad
  situations is at most $O(f\cdot\log n)$.  Each of these subroutines
  can make at most $\LEARNBOUND$ regret, a total of
  $O(\LEARNBOUND\cdot f\cdot\log n)$.

  From Theorem~\ref{thm:tsallis-inf}, each subroutine in a good
  situation contributes to at most
  \[
  4\sqrt{K\LEARNBOUND}+1,
  \]
  since it runs for at most $\LEARNBOUND$ rounds.

  Combining both bounds, we have that the regret incurred by the
  algorithm is at most
  \[
  O(\LEARNBOUND\cdot f\cdot\log n)+
  2T/\LEARNBOUND\cdot O(\sqrt{K\LEARNBOUND})
  =O(\LEARNBOUND\cdot f\log n + T\sqrt{K/\LEARNBOUND}).
  \]

  Choosing $D=\tilde{\Theta}(T^{2/3}K^{1/3}/f^{2/3})$, we have that
  the pseudo-regret is at most
  \[
  \tilde{O}(T^{2/3}K^{1/3}f^{1/3}),
  \]
  as claimed.
\end{proof}

\subsection{When the best arm is easy to identify}
\label{sect:easy}

In this section, we assume that $\Delta_{\min}$ is a constant.  In
this case, the second part of Theorem~\ref{thm:tsallis-inf} ensures
that the pseudo-regret for subroutines in good situations after
$\LEARNBOUND$ rounds is at most $O(K\log\LEARNBOUND)$.

Using the same analysis as in Theorem~\ref{thm:main-regret}, we have
that the pseudo-regret can be bounded by
\[
O(\LEARNBOUND\cdot f\log n + (T/D)\cdot K\log D).
\]
Choosing $\LEARNBOUND=\Omega(\sqrt{TK/f})$, the pseudo-regret becomes
\[
\tilde{O}(\sqrt{KT\cdot f}).
\]


\section{General graphs}
\label{sect:algo-gen-graphs} 

\subsection{Reduction from trees to line graphs}

As in the graph label prediction problem, Herbster, Lever, and
Pontil~\cite{HerbsterLP08-large-diameter} show that we can reduce the
problem when the graph is a tree to the case when the graph is a line
graph (called a spine) while paying a factor of 2 for the cutsize.
The spine graph can be constructed using depth-first search on the
tree.  We state the following lemma from~Section 4
in~\cite{HerbsterLP08-large-diameter}.

\begin{lemma}[\cite{HerbsterLP08-large-diameter}]
  Given a tree $T=(V,E)$ with vertex label $y$, there exists a path
  graph $G=(V',E')$, a mapping $g:V\rightarrow V'$, and a natural
  extension of vertex label $y'$ to $V'$ such that the cutsize of $G$
  under $y'$ is at most twice the cutsize of $T$ under $y$.
\end{lemma}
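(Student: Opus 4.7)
The plan is to build the path graph $G$ from an Euler tour of $T$ produced by depth-first search. Fix any root of $T$ and perform DFS; record the sequence $v_1,v_2,\ldots,v_m$ of tree vertices in the order they are visited (with repetitions), so that consecutive entries $v_i,v_{i+1}$ are adjacent in $T$ and each tree edge is traversed exactly twice — once descending, once backtracking. Take $V'=\{v'_1,\ldots,v'_m\}$ to be fresh copies indexed by position in this tour, add path edges between consecutive $v'_i,v'_{i+1}$, and let $g(v)$ be the first index $i$ at which $v$ appears in the tour.

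For the labeling, define the natural extension $y'(v'_i)=y(v_i)$. This is well-defined for free: copies of the same tree vertex live at different positions in the path but are assigned the same label, so there is no conflict.

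Next, I would relate cut edges of $(G,y')$ to cut edges of $(T,y)$. By construction, the path edge $(v'_i,v'_{i+1})$ is cut in $G$ iff $y(v_i)\neq y(v_{i+1})$, which is exactly the condition that the underlying tree edge $(v_i,v_{i+1})$ traversed at step $i$ is cut in $T$. Since each tree edge appears as the ``underlying edge'' of at most two positions of the tour, each cut edge of $T$ contributes at most two cut edges to $G$ and each non-cut tree edge contributes zero. Summing gives $\CSIZE_G(y')\le 2\,\CSIZE_T(y)$.

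There is no real obstacle here; the only thing to be careful about is the bookkeeping that (i) consecutive Euler-tour entries correspond to a genuine tree edge, and (ii) assigning labels by position never produces inconsistent labels on copies of a single tree vertex. Both follow directly from the DFS construction, so the proof is essentially a one-shot application of the Euler-tour trick.
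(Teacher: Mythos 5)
Your proof is correct, and it is essentially the same construction the paper relies on: the lemma is quoted from Herbster, Lever, and Pontil, whose ``spine'' is exactly the DFS/Euler-tour linearization you describe (each tree edge traversed twice, labels copied to the duplicated vertices), giving cutsize at most $2\CSIZE_T(y)$.
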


\subsection{Random spanning tree method}

To deal with general graphs, we simply use random spanning tree method
as described in Cesa-Bianchi, Gentile, and
Vitale~\cite{Cesa-BianchiGV09-optimal-tree}; Herbster, Lever, and
Pontil~\cite{HerbsterLP08-large-diameter}; and Cesa-Bianchi, Gentile,
Vitale, and Zappella~\cite{Cesa-BianchiGVZ10-icml-random-trees}.  The
cutsize parameter of the learning problem in this case becomes the
expected cutsize of a random spanning tree.


\section{Related work}
\label{sect:related}

The work present in this paper can be compared to results from
standard contextual bandit that deals with large number of policies.
There are also a number of results dealing directly with a set of
related bandit problems that either can be clustered in an online
fashion or can be partitioned into equivalence classes.

\subsection{Comparison to other contextual bandit algorithms}

When considering bandit performance, it is useful to clearly specify
the set for which its performance is compared against.  In the
contextual bandit settings, for a given context set $\mathcal S$, a
policy $g:{\mathcal S}\rightarrow\{1,\ldots,K\}$ is a mapping from the
context set to the set of arms.  Let $\Pi$ be the set of possible
policies and $N=|\Pi|$.  We would like the regret bound to be
sublinear in $T$ and logarithmic in $N$ because the set of policies
can be very large.  Note that in the context of graph prediction, if
the cutsize is $\CSIZE$, the number of possible policies $N$ can be as
large as ${n-1 \choose\CSIZE}=\Omega(n^\CSIZE)$.  Under this settings,
the contextual bandit algorithms in the adversarial settings such as
{\tt Exp4} by Auer, Cesa-Bianchi, Freund, and Schapire~\cite{AuerCBFS03-nonstochastic-bandit} and {\tt
  Exp4.P} by Beygelzimer, Langford, Li, Reyzin, and Schapire~\cite{BeygelzimerLLRS11-contextual-supervised-guarantees},
have regret bound of $O(\sqrt{KT\log N})=O(\sqrt{KT\cdot f\log n})$
(in expectation for~\cite{AuerCBFS03-nonstochastic-bandit} and with
high probability
in~\cite{BeygelzimerLLRS11-contextual-supervised-guarantees}).

Since our more general result (in Theorem~\ref{thm:main1}) has a
regret bound that depends on $T^{2/3}$, it is likely to be suboptimal.
However, when we assume that that the best arm is much better than the
other arms (in Section~\ref{sect:easy}), our regret bound is of the
right order, i.e., $\tilde{O}(\sqrt{KT\cdot\CSIZE})$.

While the adversarial bandit algorithms {\tt
  Exp4}~\cite{AuerCBFS03-nonstochastic-bandit} and {\tt
  Exp4.P}~\cite{BeygelzimerLLRS11-contextual-supervised-guarantees},
in the general case, perform better than the one presented here, these
algorithms have to process weight vector of size $N=\Omega(n^\CSIZE)$,
which can be very large.  Beygelzimer~{\em et al}~\cite{BeygelzimerLLRS11-contextual-supervised-guarantees} also
present algorithm {\tt VE} that competes against policies with finite
VC dimension $d$ in an i.i.d case.  This algorithm, in our case, is
again inefficient as it has to run against a policy set of size
$\Omega(\sqrt{T}^{\CSIZE})$, because the VC dimension is at least
$\CSIZE$.

More recent works such as Langford and
Zhang~\cite{LangfordZ07-epoch-greedy}, Dudik~{\em et
  al}~\cite{DudikHKKLRZ11-efficient-optimal}, and Agarwal~{\em et
  al}~\cite{AgarwalHKLLS14-taming} aim to reduce the dependency on $N$
on the running time and memory requirements.  Instead of keeping a
large weight vector, they make calls to a hypothesis searching oracle.
Agarwal~{\em et al}~\cite{AgarwalHKLLS14-taming} achieves the optimal
regret bound of $O(\sqrt{T\log(|\Pi|)})$, while making only
$O(\sqrt{T/\log(|\Pi|)})$ calls.  The drawback is that they all work
under the i.i.d. assumption, i.e., they assume that the context and
reward for each round is independently sampled from a fixed
distribution.  While our work assumes the stochastic model, i.e., the
reward is independently sampled for a given context, the contexts
given to the algorithm can be generated by an adversary.

\subsection{Comparison to other clustering results for bandit problems}

There are results that consider online clustering of linear bandits,
e.g.,~\cite{CBGZ-nips13-gang-bandits,
  pmlr-v32-gentile14-online-clustering,
  pmlr-v70-gentile17a-context-dependent,
  ijcai2019-405-improved-online-clustering-bandits}.
Cesa-Bianchi, Gentile, and Zappella~\cite{CBGZ-nips13-gang-bandits} consider a linear bandit problem with
social relationship modeled as an undirected graph $G=(V,E)$ where $V$
represents a set of $n$ users, each with an unknown parameter vector
$u_i\in\RR^d$.  The graph provides structures to the parameters $u_i$,
i.e., they assume that $\sum_{(i,j)\in E}\|u_i-u_j\|^2$ are small
compared to $\sum_{i\in V}\|u_i\|^2$.  The learning proceeds in
rounds.  For each round $t$, the user index $i_t$ and a set of
arbitrary context vectors $C_{i_t}=\{x_{t,1},\ldots,x_{t,c_t}\}$ is
presented and the learner has to pick one action $\bar{x_t}\in
C_{i_t}$ and receives a reward of $u_i^T \bar{x_t}$ with an additional
sub-Gaussian noise. Cesa-Bianchi~{\em et al}~\cite{CBGZ-nips13-gang-bandits} maintain a set of
$n$ linear bandit algorithms and an inverse correlation matrix $M_t$
for feedback sharing between bandit algorithms.  They obtain a regret
bound that depends on $\sqrt{nT}$ and log determinant of the matrix
$M_t$, which can be
$O(n)$. Gentile, Li, and Zappella~\cite{pmlr-v32-gentile14-online-clustering} consider a more
structured setting, where users can be partitioned into $m$ unknown
clusters and the context vectors $C_t$ in each round $t$ are generated
i.i.d. (where the size can be arbitrary).  We note that this setting
is closely related to our work where $m=\CSIZE+1$.
Gentile~{\em et al}~\cite{pmlr-v32-gentile14-online-clustering} give a regret bound that
depends on $\sqrt{mT}$ with additional $O(n+m)$ terms that are
constant with $T$.  A recent
result by Gentile~{\em et al}~\cite{pmlr-v70-gentile17a-context-dependent} considers various
data-dependent assumptions to obtain sharper bounds that depend on
$\sqrt{Tm}$ and $n\cdot\mathrm{polylog}(nT)$.

Another line of work, by Maillard and Munos\cite{pmlr-v32-maillard14-latent-bandits} and Hong~{\em et al}~\cite{NEURIPS2020_9b7c8d13-latent-bandits-revisited},
considers latent bandits where there is a
partition of context types ${\mathcal B}$ into $C$ clusters $\mathcal
C=\{{\mathcal B}_c\}$, each with known reward distribution.  However,
the learner, when receiving the context type $b\in{\mathcal B}$, does
not know the cluster ${\mathcal B}_c$ containing $b$.  This is a much
harder problem.  Our setting provides more structures, in forms of
graphs, that guides the clustering of vertices.  We, however, do not
know the reward distributions before hand.

\subsection{Other related works}

There are other works on contextual bandit problems with some
structure.  See, for example,~\cite{Agrawal:1995:CBP:218209.218238,
  Kleinberg:2008:MBM:1374376.1374475},
and~\cite{Slivkins14-Contextual-Similarity} that consider similarity
between contexts and arms.

There are numerous extensions to the graph label prediction problems,
see, e.g.,~\cite{HerbsterPW05-icml-learning-graphs,
  HerbsterP06-nips-perceptron, HerbsterLP08-large-diameter,
  Cesa-BianchiGV09-optimal-tree, GentileHP13},
and~\cite{PasterisVGH18-similarity-pairwise}.

Alon~{\em et al}~\cite{AlonCGMMS17-graph-feedback} consider a multi-armed bandit
problem when the feedback is constrained with a feedback graph.


\section{Acknowledgements}

We would like to thanks reviewers from COLT'19, ALT'22, and IPL for
many useful comments.  Especially, reviewers from ALT'22 who suggested
a simpler and better algorithm presented here.  Both authors are
supported by the Thailand Research Fund~[Grant number RSA-6180074].

\bibliographystyle{elsarticle-num}
\bibliography{graphcontexts}


\end{document}